\definecolor{myblue}{HTML}{0153d6} 
\definecolor{myred}{HTML}{ff409b}
\definecolor{commentcolor}{RGB}{110, 154, 155}
\newcommand{\ours}{\text{AdaFlow}}
\newcommand{\printfnsymbol}[1]{%
  \textsuperscript{\@fnsymbol{#1}}%
}
\title{
 \ours{}: Imitation Learning with Variance-Adaptive Flow-Based Policies
}
\date{}
\author{%
  Xixi Hu, Bo Liu, Xingchao Liu, Qiang Liu\\
  The University of Texas at Austin\\
  \texttt{\{hxixi,bliu,xcliu,lqiang\}@cs.utexas.edu}
}
\begin{document}
\maketitle

\begin{abstract}
Diffusion-based imitation learning improves Behavioral Cloning (BC) on multi-modal decision-making, but comes at the cost of significantly slower inference due to the recursion in the diffusion process. 
It urges us to design efficient policy generators while keeping the ability to generate diverse actions.
To address this challenge, we propose~\ours{}, an imitation learning framework based on flow-based generative modeling. 
\ours{} represents the policy with state-conditioned ordinary differential equations (ODEs), which are known as probability flows.
We reveal an intriguing connection between the conditional variance of their training loss and the discretization error of the ODEs.
With this insight, we propose a variance-adaptive ODE solver that can adjust its step size in the inference stage, making
\ours{} an adaptive decision-maker, offering rapid inference without sacrificing diversity.
Interestingly, it automatically reduces to a one-step generator when the action distribution is uni-modal. 
Our comprehensive empirical evaluation shows that \ours{} achieves high performance with fast inference speed.
\end{abstract}

\begin{figure}[H]
    \centering
    \includegraphics[width=\columnwidth]{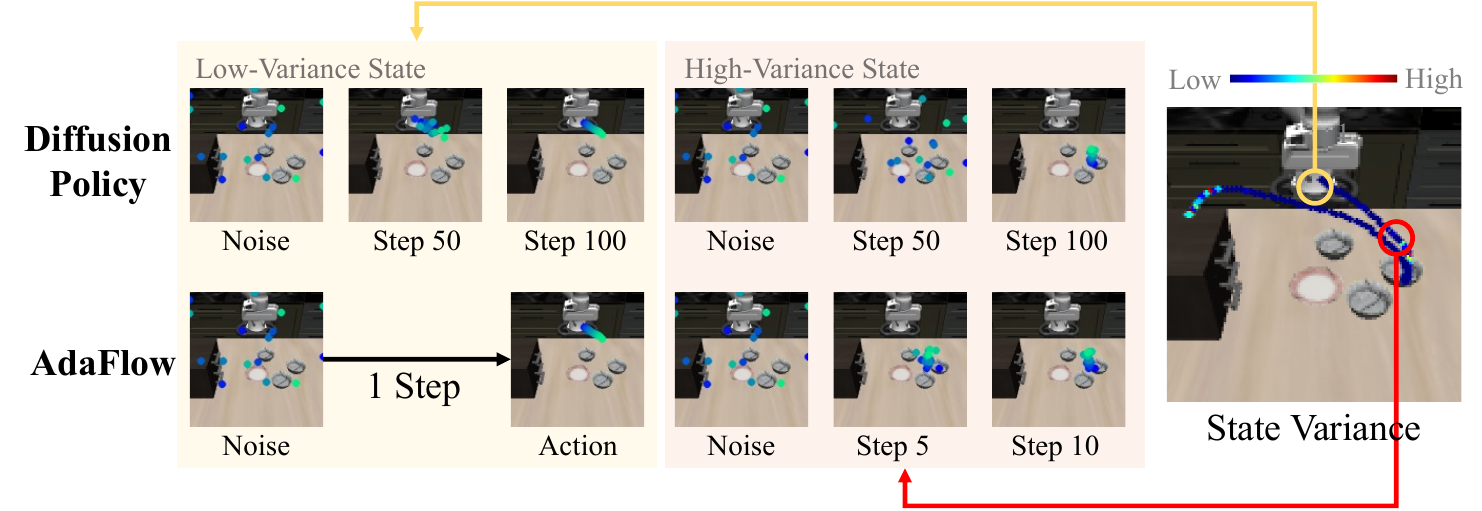}
    \caption{\ours{} is a fast imitation learning policy. It adaptively adjust the number of simulation steps when generating actions. For low-variance states, it functions as a one-step action generator. For high-variance states, it employs more steps to ensure accurate action generation. This adaptive approach enables \ours{} to achieve an average generation speed close to one step per task completion.}
    \label{fig:teasing}
\end{figure}

\section{Introduction}
Imitation Learning (IL) is a widely adopted method in robot learning~\cite{schaal1999imitation, osa2018algorithmic}. In IL, an agent is given a demonstration dataset from a human expert finishing a certain task, and the goal is for it to complete the task by learning from this dataset. IL is notably effective for learning complex, non-declarative motions, yielding remarkable successes in training real robots~\cite{liu2021lifelong, zhu2022viola, brohan2022rt, chi2023diffusion}.

The primary approach for IL is Behavioral Cloning (BC)~\cite{pomerleau1988alvinn, ross2011reduction,  torabi2018behavioral, mandlekar2021matters}, where the agent is trained with supervised learning to acquire a deterministic mapping from states to actions. Despite its simplicity, vanilla BC struggles to learn diverse behaviors in states with many possible actions~\cite{mandlekar2021matters, shafiullah2022behavior, chi2023diffusion, florence2022implicit}. To improve it, various frameworks have been proposed. For instance, Implicit BC~\cite{florence2022implicit} learns an energy-based model for each state and searches the actions that minimize the energy with optimization algorithms. Diffuser~\cite{janner2022planning, ajay2022conditional} and Diffusion Policy~\cite{chi2023diffusion} adopts diffusion models~\cite{songscore, ho2020denoising} to generate diverse actions, which has become the default method for training on large-scale robotics data~\cite{sridhar2023nomad, team2023octo, shah2023vint, hansen2023idql}. 

The computational cost of the learned policies at the execution stage is important for an IL framework in a real-world deployment. Unfortunately, none of the previous frameworks enjoys both efficient inference and diversity.
Although energy-based models and diffusion models can generate multi-modal action distributions, they require \textit{recursive processes} to generate the actions.
These recursive processes usually involve tens (or even hundreds) of queries before reaching their stopping criteria.

\begin{figure}[t!]
    \centering
    \includegraphics[width=\columnwidth]{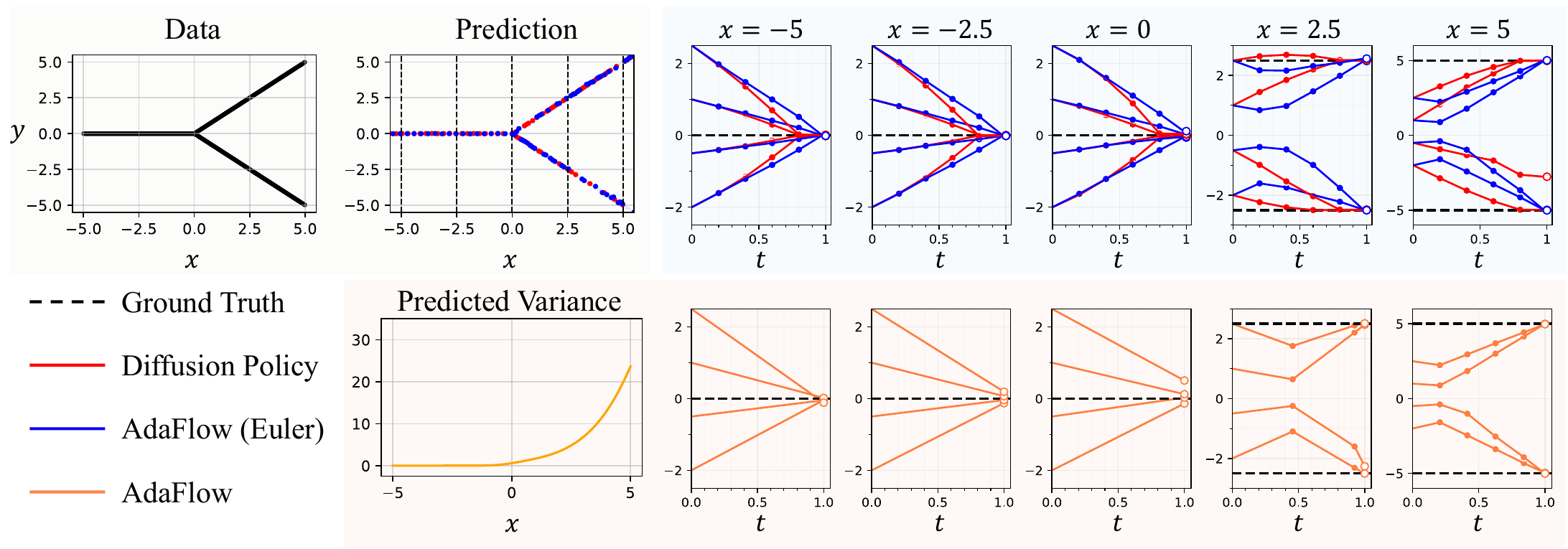}
    \caption{
    Illustrating the computation adaptivity of \ours{}~(\textcolor{orange}{orange}) on simple regression task. In the upper portion of the image, we use Diffusion Policy (DDIM) and \ours{} to predict $y$ given  $x$, with deterministic $y=0$ when $x\leq 0$, and bimodal $y =\pm x$ when $x>0$. 
    Both DDIM and \ours{} fit the demonstration data well. However, the simulated ODE trajectory learned by Diffusion-Policy with DDIM (\textcolor{red}{red}) is not straight no matter what $x$ is. By contrast, the simulated ODE trajectory learned by \ours{} with fixed step (\textcolor{blue}{blue}) is a straight line when the prediction is deterministic ($x \leq 0$), which means the generation can be exactly done by one-step Euler discretization. At the bottom, we show that \ours{} can adaptively adjust the number of simulation steps based on the $x$ value according to the estimated variance at $x$.}
    \label{fig:1d}
\end{figure}

In this paper, we propose a new IL framework, named~\ours{}, that learns a dynamic generative policy that can autonomously adjust its computation on the fly, 
thus cheaply outputs multi-modal action distributions to complete the task.
\ours{} is inspired by recent advancements in flow-based generative modeling~\cite{liu2022flow, lipman2022flow, albergo2023stochastic, iterative2023Heitz}. 
We learn probability flows, which are essentially ordinary differential equations (ODEs), to represent the policies.
These flows are powerful generative models that precisely capture the complicated distributions, but similar to energy-based models and diffusion models, they still require \textit{multiple recursive iterations} to simulate the ODEs in the inference stage. 

\ours{} differs from existing flow generative models like Rectified Flow~\cite{liu2022rectified} and Consistency Models~\cite{song2023consistency}, by utilizing the \textit{initially learned ODE} to maintain low training and inference costs, and function as a one-step generator for deterministic target distributions. In contrast, both of these methods require an additional distillation or reflow~\cite{liu2022rectified} process to achieve fast inference. To improve the efficiency, we propose an adaptive ODE solver based on the finding that the simulation error of the ODE is closely related to the variance of the training loss at different states.
We let the action generation model to output an additional variance scalar alongside the action it produces.
During the execution of the policy, we change the step size according to the variance predicted at the current state.
Equipping the flow-based policy with the proposed adaptive ODE solver, \ours{} wisely allocates computational resources, yielding high efficiency without sacrificing the diversity provided by the flow-based generative models.
Specifically, in states with deterministic action distributions, \ours{} generates the action in \emph{one step} -- as efficient as naive BC.

Empirical results across multiple benchmarks demonstrate that \ours{} achieve consistently good performance across success rate with high execution efficiency. Specifically, our contributions are:  

\begin{itemize}
    \item We proposed \ours{} as a generative model-based policy for decision making tasks, capable of generating actions almost as quickly as a single model inference pass.
    \item We conducted comprehensive experiments across decision making tasks, including navigation and robot manipulation, utilizing benchmarks such as LIBERO~\cite{liu2023libero} and RoboMimic~\cite{mandlekar2021matters}. \ours{} consistently outperforms existing state-of-the-art models, despite requiring 10x less inference times to generate actions. 
    \item We offer a theoretical analysis of the overall error in action generation by \ours{}, providing a bound that ensures precision and reliability.
\end{itemize}

\section{Related Work}
\label{sec::related_work}
\paragraph{Diffusion/Flow-based Generative Models and Adaptive Inference}
Diffusion models~\cite{sohl2015deep, ho2020denoising, songscore, song2019generative} succeed in various applications, e.g., image/video generation~\cite{hovideo, zhang2023hive, wudiffusion, saharia2022photorealistic}, audio generation~\cite{kongdiffwave}, point cloud generation~\cite{luo2021diffusion, liu2023learning, luo2021score, wu2023fast}, etc.. However, numerical simulation of the diffusion processes typically involve hundreds of steps, resulting in high inference cost. 
Post-hoc samplers have been proposed to solve this issue~\cite{karraselucidating, ludpm, liu2021pseudo, lu2022dpm, songdenoising, bao2021analytic} by transforming the diffusion process into marginal-preserving probability flow ODEs, yet they still use the same number of inference steps for different states. Although adaptive ODE solvers, such as adaptive step-size Runge-Kutta~\cite{press1992adaptive}, exist, they cannot significantly reduce the number of inference steps. In comparison, the adaptive sampling strategy of \ours{} is specifically designed 
based on intrinsic properties of
the ODE learned rectified flow, and can achieve one-step simulation for most of the states, making it much faster for decision-making tasks in real-world applications. Recently, new generative models~\cite{liu2022flow, liu2022rectified, lipman2022flow, albergo2023stochastic, albergo2022building, iterative2023Heitz, song2023consistency, salimansprogressive} have emerged. These models directly learn probability flow ODEs by constructing linear interpolations between two distributions, or learn to distill a pretrained diffusion model~\cite{song2023consistency, salimansprogressive} with an additional distillation training phase. Empirically, these methods exhibit more efficient inference due to their preference of straight trajectories. Among them, Rectified flow achieves one step generation with reflow, a process to straighten the ODE. However, it requires a costly synthetic data construction. 

By contrast, \ours{} only leverages the initially learned ODE, but still keeps cheap training and inference costs that are similar to behavior cloning. We achieve this by unveiling a previously overlooked feature of these flow-based generative models: 
they act as one-step generators for deterministic target distributions, and their variance indicates the straightness of the probability flows for a certain state. Leveraging this feature, we design~\ours{} to automatically change the level of action modalities given on the states.

\paragraph{Diffusion Models for Decision Making}
For decision making, diffusion models obtain success as in other applications areas~\cite{kapelyukh2023dall, yang2023foundation, pearce2022imitating, chang2022flow}. In a pioneering work, Janner et al.~\cite{janner2022planning} proposed ``Diffuser'', a planning algorithm with diffusion models for offline reinforcement learning. This framework is extended to other tasks in the context of \textit{offline reinforcement learning}~\cite{wang2022diffusion}, where the training dataset includes reward values. For example, Ajay et al.~\cite{ajay2022conditional} propose to model policies as conditional diffusion models. The application of DDPM~\cite{ho2020denoising} and DDIM~\cite{songdenoising} on visuomotor policy learning for physical robots~\cite{chi2023diffusion} outperforms counterparts like Behavioral Cloning.
Freund et al.~\cite{freund2023coupled} exploits two coupled normalizing flows to learn the distribution of expert states, and use that as a reward to train an RL agent for imitation learning. \ours{} admits a much simpler training and inference pipeline compared with it.
Despite the success of adopting generative diffusion models as decision makers in previous works, they also bring redundant computation, limiting their application in real-time, low-latency decision-making scenarios for autonomous robots. \ours{} propose to leverage rectified flow instead of diffusion models, 
facilitating adaptive decision making for different states while significantly reducing computational requirements. In this work, similar to Diffusion Policy~\cite{chi2023diffusion}, we focus on offline imitation learning. While \ours{} could theoretically be adapted for offline reinforcement learning, we leave it for future works.
\section{\ours{} for Imitation Learning}
\label{sec::method}
To yield an agent that enjoys both multi-modal decision-making and fast execution, 
we propose~\ours{}, an imitation learning framework based on flow-based generative policy.
The merits of \ours{} lie in its adaptive ability: it identifies the behavioral complexity at a state before allocating computation. If the state has a deterministic choice of action, it outputs the required action rapidly; otherwise, it spends more inference time to take full advantage of the flow-based generative policy.
This handy adaptivity is made possible by leveraging a combination of elements: 1) a special property of the flow 2) a variance estimation neural network and 3) a variance-adaptive ODE solver. We formally introduce the whole framework in the sequel.

\subsection{Flow-Based Generative Policy}

Given the expert dataset $D= \left \{ (s^{(i)}, a^{(i)}) \right \}_{i=1}^n$, our goal is to learn a policy $\pi_\theta$ that can generate trajectories following the target distribution $\pi_E$. $\pi_\theta$ can be induced from a state-conditioned flow-based model,
\begin{equation}
\label{eq:pi_ode}
    \d \vv z_t = v_\theta(\vv z_t, t \mid s) \d t,~~\vv z_0 \sim \mathcal{N}(0, I).
\end{equation}
Here, $s$ is the state and the velocity field is parameterized by a neural network $\theta$ that takes the state as an additional input. To capture the expert distribution with the flow-based model, the velocity field can be trained by minimizing a state-conditioned least-squares objective,
\begin{equation}
\label{eq:policy_loss}
    L(\theta ) = 
    \!\!\!\underset{(\vv s,\vv a) \sim D \atop \vv x_0 \sim \mathcal{N}(0, I)}{\E} \!\!\!
\left [\int_{0}^1\norm{\vv a - \vv x_0 - v_\theta(\vv x_t,t \mid \vv s)}^2_2 \d t \right ],
\end{equation}
where $\vv x_t$ is the linear interpolation between $\vv x_0$ 
and $\vv x_1 = \vv a$:
\begin{equation} \label{equ:xtta} 
\vv x_t = t \vv a + (1-t) \vv x_0.
\end{equation}
We should differentiate  $\vv z_t$ 
which is the ODE trajectory in \eqref{eq:pi_ode} 
from the linear interpolation $\vv x_t$. 
They do not overlap unless all trajectories of ODE \eqref{eq:pi_ode} are straight. See Liu et al.~\cite{liu2022flow} 
for more discussion. 

With infinite data sampled from $\pi_E$, unlimited model capacity and perfect optimization, it is guaranteed that the policy $\pi_\theta$ generated from the learned flow matches the expert policy $\pi_E$~\cite{liu2022flow}.

\subsection{The Variance-Adaptive Nature of Flow}
\label{sec:property}

Typically, to sample from the distribution $\pi_\theta$ at state $s$, we start with a random sample $\vv z_0$ from the Gaussian distribution and simulate the ODE (Eq.~\eqref{eq:pi_ode}) with multi-step ODE solvers to get the action. For example, we can exploit $N$-step Euler discretization,
\begin{equation}
    \vv z_{t_{i+1}} = \vv z_{t_i} + \frac{1}{N} v_\theta \left (\vv z_{t_i}, t_i~\middle\vert\ \vv s \right),~~t_i=\frac{i}{N}, 0\leq i < N. 
\end{equation}
After running the solver, $\vv z_1$ is the generated action. This solver requires inference with the network $N$ times for decision making in every state. Moreover, a large $N$ is needed to obtain a smaller numerical error.

However, different states may have different levels of difficulty in deciding the potential actions. 
For instance, when traveling from a city A to another city B, there could be multiple ways for transportation, corresponding to a multi-modal distribution of actions. 
After the way of transportation is chosen, the subsequent actions to take will be almost deterministic.
This renders using a uniform Euler solver with the same number of inference steps $N$ across all the states a sub-optimal solution.
Rather, it is preferred that the agent can vary its decision-making process as the state of the agent changes.
The challenge is how to quantitatively estimate the \emph{complexity of a state} and employ the estimation to \emph{adjust the inference of the flow-based policy}.

\paragraph{Variance as a Complexity Indicator} We notice an intriguing property of the  policy learned by rectified flow, 
which connects the complexity of a state with the training loss of the flow-based policy:  
if the distribution of actions is deterministic at a state $\vv s$ (i.e., a Dirac distribution), the trajectory of rectified flow ODE is a straight line, i.e., \emph{a single Euler step} yields an exact estimation of $\vv z_1$.

\begin{pro}\label{pro:good}
Let $v^*$ be the optimum of Eq.~\eqref{eq:policy_loss}~.  
If $\var_{\pi_E}(\vv a \mid \vv s) = 0$ where $\vv a \sim \pi_E(\cdot \mid s)$, then 
the learned ODE conditioned on $\vv s$ is 
\begin{equation}\label{equ:dzt}
    \d{\vv z_t} = v^*(\vv z_t, t \mid \vv s) \d t = (\vv a  - \vv z_0) \d t,~~\forall t \in [0,1], 
\end{equation}
whose trajectories are straight lines pointing to $\vv z_1$ and hence can be calculated exactly with one step of Euler step: 
$$\vv z_1 = \vv z_0 + v^*(\vv z_0, 0~|~\vv s).$$ 
\end{pro}
Note that 
the straight trajectories of \eqref{equ:dzt} 
satisfies $\vv z_t = t \vv a + (1-t)\vv z_0 $,
which makes it coincides with the linear interpolation $\vv x_t$. 
As show in \cite{liu2022flow}, this happens only when all the linear trajectories 
do not intersect on time $t\in [0,1)$. 

More generally, we can expect that the straightness of the ODE trajectories depends on how deterministic the expert policy $\pi_E$ is. 
Moreover, the straightness can be quantified 
by a conditional variance metric defined as follows: 
\begin{align}
\sigma^2(x,t ~|~  \vv s)
& = \var(\vv a - \vv x_0 ~|~ \vv x_t = x, \vv s)   \label{eq:gt-var}
 \\ 
& = \E\left [\norm{\vv a - \vv x_0 - v^*(\vv x_t, t ~|~ \vv s)}^2~|~ \vv x_t = x, \vv s \right ].  \notag 
\end{align}

\begin{pro}
\label{pro:good2}
Under the same condition as Proposition~\ref{pro:good}, we have 
$\sigma^2(\vv z_t, t~|~ \vv s) = 0$ from \eqref{equ:dzt}. 
\end{pro}

The proof of the above propositions is in Appendix~\ref{sec::apx-proof-pro:good-pro:good2}. To summarize, the variance of the state-conditioned loss function at $(\vv z_t, t)$ can be an indicator of the multi-modality of actions. When the variance is zero, the flow-based policy can generate the expected action with only one query of the velocity field, saving a huge amount of computation. 
In Section~\ref{sec:varada}, we will show the variance can be used to bound the discretization error, thereby enabling the design of an adaptive ODE solver. 

\paragraph{Variance Estimation Network} In practice, 
the conditional variance $\sigma^2(x, t~|~ \vv s)$ can be empirically 
approximated by a neural network $\sigma_\phi^2 (x, t~|~ \vv s)$ with parameter $\phi$. 
Once the neural velocity $v_\theta$ is learned, 
we can estimate $\sigma_\phi$ by minimizing the following Gaussian negative log-likelihood loss:  
\begin{equation}
\label{eq:var_loss}
\begin{aligned}
\min_{\phi} 
\mathbb{E}
\bigg[ \int_0^1
\frac{\norm{\vv a - \vv x_0 - v_\theta(\vv x_t, t | \vv s)}^2}{2 \sigma_\phi^2(\vv x_t, t |\vv  s)}   
+ \log \sigma_\phi^2(\vv x_t, t | \vv s) \d t  \bigg].
\end{aligned}
\end{equation}
We adopt a two-stage training strategy by first training the velocity network $v_\theta$ then the variance estimation network $\sigma_\phi$.
In practice, the second stage just involves fine-tuning a few linear layers on top of the trained velocity network.
Alternatively, we can optimize both the variance estimation and action generation simultaneously, which can extend training time. Our experiments show that joint training and two-stage training yield comparable performance.

\begin{algorithm}[t]
    \caption{\ours{}: Execution}
    \begin{algorithmic} [1]
        
        \STATE \textbf{Input:}  Current state $\vv s$, minimal step size $\epsilon_{\text{min}}$, error threshold $\eta$, pre-trained networks $v_\theta$ and $\sigma_\phi$.
        \STATE Initialize $\vv z_0 \sim \mathcal{N}(0, I)$, $t=0$.
        \WHILE{$t < 1$}
            \STATE Compute step size 
            $$
            \epsilon_t = 
            \mathrm{Clip}\left (\frac{\eta}{\sigma_\phi(\vv z_t, t \mid \vv s)}, ~~~ 
            [\epsilon_{\min}, ~ 1-t] \right).
            $$
            \STATE Update~~~$t \leftarrow t + \epsilon_t, \vv z_{t} \leftarrow \vv z_t + \epsilon_t v_\theta(\vv z_t, t \mid \vv s).$
        \ENDWHILE
        \STATE Execute action $a = \vv z_1$.
    \end{algorithmic}
    \label{alg:variational_inference}
    
\end{algorithm}

\subsection{Variance-Adaptive Flow-Based Policy}
\label{sec:varada}
Because the variance indicates the straightness of the ODE trajectory,
it allows us to develop an adaptive approach to set the step size to yield better estimation with lower error during inference. 

To derive our method, let us consider to 
advance the ODE with  step size $\epsilon_t$ at $\vv z_t$:  
\begin{equation}\label{equ:zt2}
    \vv z_{t+\epsilon_t} = \vv z_t + \epsilon_t v^*(\vv z_t, t \mid \vv s).
\end{equation}
The problem is how to set the step size $\epsilon_t$ properly. If $\epsilon_t$ is too large, the discretized solution will significantly diverge from the continuous solution; if $\epsilon_t$ is too small, it will take excessively many steps to compute. 

We propose an adaptive ODE solver based on the principle of matching the discretized marginal distribution $p_t$ of $\vv z_t$ from \eqref{equ:zt2}, 
and the ideal marginal distribution $p^*_t$ when following the exact ODE \eqref{eq:pi_ode}.  
This is made possible 
with a key insight below showing that the discretization error can be bounded by the conditional variance $\sigma^2(\vv z_t, t ~|~s)$. 
\begin{pro}
\label{pro:w2}
Let $p_t^*$ be the marginal distribution of the exact ODE $\d \vv z_t = v^*(\vv z_t, t~|~s) \d t$. Assume $\vv z_t \sim p_t = p_t^*$  and $p_{t+\epsilon_t}$ the distribution of $\vv z_{t+\epsilon_t}$ following \eqref{equ:zt2}. Then we have 
$$
W_2(p_{t+\epsilon_t}^*,  
p_{t+\epsilon_t})^2
 \leq   \epsilon^2_t \E_{\vv z_t\sim p_t}[\sigma^2(\vv z_t, t ~|~s)], 
$$
where $W_2$ denotes the 2-Wasserstein distance.
\end{pro}
We provide the proof in Appendix~\ref{sec::apx-proof-pro:w2}. Hence, given a threshold $\eta$,
to ensure that an error of $W_2(p_{t+\epsilon_t}^*,  
 p_{t+\epsilon_t})^2 \leq \eta^2$, we can bound the step size by $\epsilon_t \leq \eta/\sigma(\vv z_t, t \mid \vv s)$. 
Because $\epsilon_t$ at time $t$ should not be large than $1-t$, 
we suggest the following rule for setting the step size $\epsilon_t$ at $\vv z_t$ at time $t$: 
\begin{equation} \label{equ:epsilonRule} 
\epsilon_t = 
\mathrm{Clip}\left (\frac{\eta}{\sigma(\vv z_t, t \mid \vv s)}, ~~~ 
[\epsilon_{\min}, ~ 1-t] \right),
\end{equation}
where we impose an additional lower bound $\epsilon_{\min}$ to avoid $\epsilon_t$ to be unnecessarily small. 
Besides, the proposed adaptive strategy guarantees to instantly arrive at the terminal point when $\sigma^2(\vv z_t, t \mid s)=0$ as $\epsilon_t=1-t$. 
Moreover, it aligns with Section.~\ref{sec:property} since for states with deterministic actions, it sets $\epsilon_0 = 1$ to generate the action in one step. We incorporate the above insights to the execution in Algorithm~\ref{alg:variational_inference}.

\paragraph{Global Error Analysis} 
Proposition~\ref{pro:w2} provides the local error at each Euler step. 
In the following, 
we provide an analysis of the overall error 
for generating $\vv z_1$ when we simulate ODE while following the adaptive rule \eqref{equ:epsilonRule}.  
To simplify the notation, we drop the dependency on the state $s$, 
and write $v_t^*(\cdot) = v^*(\cdot , t~|~\vv s)$. 

\begin{ass}\label{ass:euler}  
Assume $\norm{v_t^*}_{Lip} \leq L$ for $t\in[0,1],$ and 
the solutions of $\d \vv z_t = v_t(\vv z_t) \d t$ has bounded second curvature 
$\norm{\ddot{\vv z_t}} \leq M$ 
for 
$t\in[0,1]$. 
\end{ass} 
This is a standard assumption in numerical analysis, 
under which Euler's method with a constant step size  of $\epsilon_{\min}$ admits a global error of order $\bigO{\epsilon_{\min}}$. 

\begin{pro}
\label{pro:w22}
Under Assumption~\ref{ass:euler}, 
assume we follow Euler step \eqref{equ:zt2} with step size $\epsilon_t$ in \eqref{equ:epsilonRule}, starting from $\vv z_0 = \vv x_0 \sim p_0^*$. 
Let $p_t$ be the distribution of $\vv z_t$ we obtained in this way, and $p_t^*$ that of $\vv x_t$ in \eqref{equ:xtta}.
Note that $p_1^*$ is the true data distribution. 
Set $\eta = M_\eta \epsilon_{\min}^2/2$ for some $M_\eta >0$,
and $\epsilon_{\min} = 1/N_{\max}$. 

Let $N_{\mathrm{ada}}$ be the number of steps we arrive at $\vv z_1$ following the adaptive schedule.  
We have 
\bb 
W_2(p_1^*, p_1) \leq C \times \frac{N_{\mathrm{ada}}}{N_{\max}} \times \epsilon_{\min}, 
\ee 
where 
$C$ is a constant depending on $M$, $M_\eta$ and $L$. 
\end{pro} 

The idea is that the error is proportional to 
$\frac{N_{\mathrm{ada}}}{N_{\max}}$, 
suggesting that the algorithm claims an improved error bound   
in the good case when it takes a smaller number of steps than the standard Euler method with constant step size $\epsilon_{min}$. We provide the proof in Appendix~\ref{sec::apx-proof-pro:w22}.

\textbf{Discussion of \ours{} and Rectified Flow.} Rectified Flow operates in two stages: the first is learning an ordinary differential equation (ODE), and the second involves a technique called "reflow" used to straighten the learned trajectory. Theoretically, reflow allows for one-step action generation. However, using reflow introduces two major drawbacks: 1) It significantly prolongs training time, particularly because generating the required pseudo noise-data pairs through ODE simulation is computationally expensive; 2) It leads to poorer generation quality due to straightened ODE. In contrast, our method utilizes only the original ODE, eliminating the need for an additional reflow or distillation process, and consistently achieves more accurate action generation.

\section{Experiments}
We conducted comprehensive experiments on four sets of tasks: \textbf{1)} a simple 1D toy example to demonstrate the computational adaptivity of \ours{}; \textbf{2)} a 2D navigation problem; and two robot manipulation task suites on \textbf{3)} RoboMimic~\citep{ajay2022conditional} following past works~\cite{chi2023diffusion} and \textbf{4)} LIBERO~\citep{liu2021lifelong}, provide diverse and realistic scenarios for evaluation.

Our results show that \ours{} improves the success rate of completing both navigation and manipulation tasks, outperforming state-of-the-art methods such as BC and its variants, as well as Diffusion Policy, across a range of tasks. Additionally, \ours{} drastically reduces the inference cost. Further experiments demonstrate that \ours{} is robust to changes in hyperparameters and can adaptively adjust its inference speed according to different states, ensuring efficient and reliable performance.

\newcommand{\cmark}{\ding{51}}%
\newcommand{\xmark}{\ding{55}}%
\begin{table}[H]
    \centering
    \caption{Comparison between BC, Diffusion Policy, Rectified Flow and AdaFlow. }
    \label{tab:comparison}
    \resizebox{0.7\columnwidth}{!}{
    \begin{tabular}{lcccc}
        \toprule
        & \textbf{BC} & \textbf{Diffusion Policy} & \textbf{Rectified Flow} & \textbf{\ours{}} \\
        
        \midrule
        Behavior Diversity & \xmark & \cmark & \cmark  & \cmark \\
        Fast Action Generation & \cmark & \xmark & \cmark  & \cmark \\
        No Distillation~/~Reflow  & \cmark & \cmark & \xmark  & \cmark \\
        \bottomrule
    \end{tabular}
    }
\end{table}

\subsection{Regression}
\label{sec::exp_1d_regression}
We start with a 1D regression task designed to demonstrate the adaptivity nature of \ours{}. The goal is to learn a mapping from $x$ to $y$ where
\begin{equation}
y = 
\begin{cases} 
0 & \text{for } x \leq 0 \\
\pm x & \text{for } x > 0.
\end{cases}
\end{equation}   
Note that $y \mid x$ is deterministic when $x \leq 0$ and stochastic otherwise. The training and testing data are uniformly sampled from the ground-truth function with $x \in [-5, 5]$. Details about the setup and the hyperparameters are provided in Appendix.

\textbf{\ours{} can achieve 1-step generation for deterministic states.} Figure~\ref{fig:1d} (top-right) shows the generation trajectories of Diffusion Policy and \ours{} with 5 step. Notably, when $x \leq 0$, \ours{} generates \emph{straight} trajectories and is therefore able to predict $y$ with a single step, aligning our analysis in Proposition~\ref{pro:good} and \ref{pro:good2}. In contrast, Diffusion Policy generates curved trajectories when step = 5, and hence cannot predict $y$ accurately with a single step. The bottom of Figure~\ref{fig:1d} shows the estimated variance by \ours{} across $x \in [-5, 5]$, which accurately aligns with the expected variance. In addition, as $x$ increases, \ours{} adaptively increases the required number of simulation steps.

\subsection{Navigating a 2D Maze}
\label{sec::exp-nav2d}
We create two sets of maze navigation tasks to validate \ours{}'s performance of modeling multi-modal behavior. In particular, we create two \emph{single-task} environments where the agent starts and ends at a fixed point and two \emph{multi-task} environments where the agent can start and end at different points. All four environments are simulated in D4RL Maze2D~\citep{fu2020d4rl} using MuJoCo. The environments and demonstrations are visualized in Figure~\ref{fig:maze_demo}. 

\noindent 
\begin{minipage}{0.50\textwidth}

\begin{table}[H]
    \centering
    \Huge
    \resizebox{\columnwidth}{!}{
    \begin{tabular}{lccccc}
        \toprule 
        Method & NFE$\downarrow$ & Maze 1 & Maze 2 & Maze 3 & Maze 4 \\
        \midrule
        \multicolumn{6}{l}{\textbf{\textit{Needs reflow}}} \\
        Rectified Flow  & 1 & 0.82 & 1.00 & 1.00 & 0.80 \\
        \cmidrule(lr){1-1}\cmidrule(lr){2-2}\cmidrule(lr){3-3}\cmidrule(lr){4-4}\cmidrule(lr){5-5}\cmidrule(lr){6-6}
        BC & 1 & \textbf{1.00} & \textbf{1.00} & 0.92 & 0.76 \\
        BC-GMM & 1 & 0.84 & 1.00  & 0.88 & 0.72 \\
        Diffussion Policy & 1 & 0.00 & 0.32 & 0.16 & 0.08 \\
        Diffussion Policy & 5 & 0.58 & \textbf{1.00} & 0.84 & 0.76 \\ 
        Diffussion Policy & 20 & 0.62 & 0.98 & 0.84 & 0.82 \\
        \ours{} & \textbf{1.56} & 0.98 & \textbf{1.00} & \textbf{0.96} & \textbf{0.86} \\
        \bottomrule 
    \end{tabular}
    }
    \caption{Performance on maze navigation tasks. The table showcases the success rate for each model across different maze complexities. The highest success rate for each task are highlighted in \textbf{bold}. NFE denotes Number of Function Evaluations. }
    \label{tab:maze_result}
\end{table}
\end{minipage}
\hfill 
\begin{minipage}{0.46\textwidth} 
\begin{figure}[H]
    \centering
    \includegraphics[width=\columnwidth]{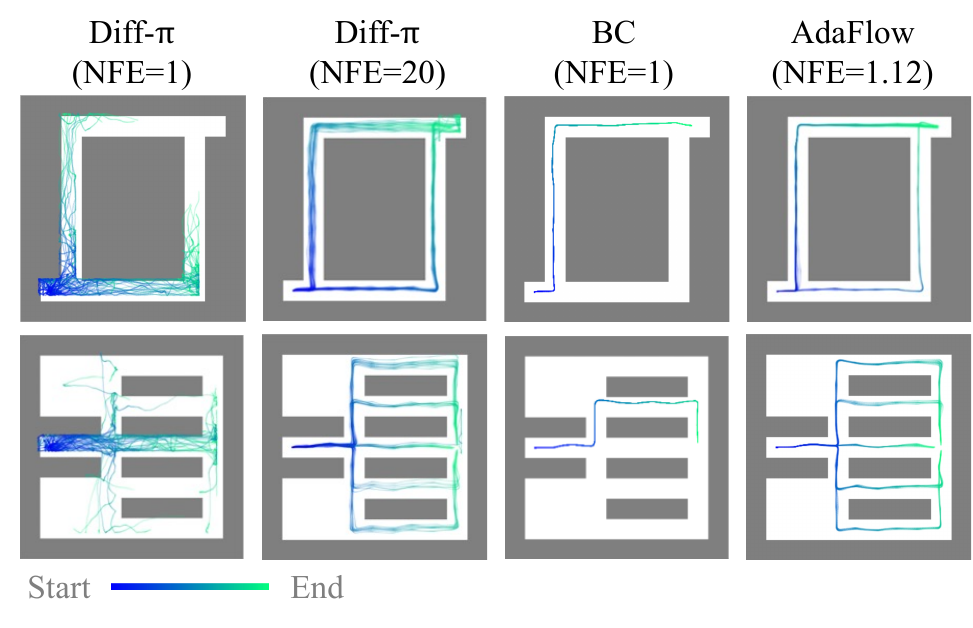}   
    \caption{\textbf{Generated trajectories.} We visualize the trajectories generated by different policies, with the agent's starting point fixed. 
    }
    \label{fig:maze_generated_traj}
\end{figure}
\end{minipage}

\textbf{\ours{} achieves high diversity and success with low NFE; Diffusion Policy and BC lag in comparison.} We compare \ours{} against baseline methods in Table~\ref{tab:maze_result}. We additionally visualize the rollout trajectories from each learned policy in Figure~\ref{fig:maze_generated_traj} as a qualitative comparison of the learned behavior across different methods. From the results, we see that \ours{} with an average  Number of Function Evaluation (NFE) of 1.56 NFE can achieve highly diverse behavior and high success rate in the meantime. By contrast, Diffusion Policy only demonstrates diverse behavior when NFE is larger than 5 and falls behind in success rate even with 20 NFE compared to \ours{}. BC, on the other hand, has high success rate while performing relatively poorly in terms of behavior diversity. 

\begin{table}[t!]
    \centering
    \caption{Success rate on RoboMimic Benchmark. The highest success rate for each task are highlighted in \textbf{bold}. }
    \resizebox{1.0\columnwidth}{!}{
    \begin{tabular}{lccccccccccc}
        \toprule
        \multirow{2}{*}{Method} & \multirow{2}{*}{NFE$\downarrow$} & \multicolumn{2}{c}{Lift} & \multicolumn{2}{c}{Can} & \multicolumn{2}{c}{Square} & \multicolumn{2}{c}{Transport} & ToolHang & Push-T \\
        \cmidrule(lr){3-4}\cmidrule(lr){5-6}\cmidrule(lr){7-8}\cmidrule(lr){9-10}\cmidrule(lr){11-11}\cmidrule(lr){12-12}
        & & ph & mh & ph & mh & ph & mh & ph & mh & ph & ph \\
        \midrule
        Rectified Flow \textbf{(\textit{Needs reflow})} & 1 & 1.00 & 1.00 & 0.94 & 1.00 & 0.94 & 0.92 & 0.90 & 0.76 & 0.88 & 0.92 \\
        \cmidrule(lr){1-1}\cmidrule(lr){2-2}\cmidrule(lr){3-4}\cmidrule(lr){5-6}\cmidrule(lr){7-8}\cmidrule(lr){9-10}\cmidrule(lr){11-11}\cmidrule(lr){12-12}
        LSTM-GMM & 1 & \textbf{1.00} & \textbf{1.00} & \textbf{1.00} & \textbf{1.00} & 0.95 & 0.86 & 0.76 & 0.62 & 0.67 & 0.69 \\
        IBC & 1 & 0.79 & 0.15 & 0.00 & 0.01 & 0.00 & 0.00 & 0.00 & 0.00 & 0.00 & 0.75 \\
        BET & 1 & \textbf{1.00} & \textbf{1.00} & \textbf{1.00} & \textbf{1.00} & 0.76 & 0.68 & 0.38 & 0.21 & 0.58 & - \\
        Diffusion Policy & 1 & 0.04 & 0.04 & 0.00 & 0.00 & 0.00 & 0.00 & 0.00 & 0.00 & 0.00 & 0.04\\
        Diffusion Policy & 2 & 0.64 & 0.98 & 0.52 & 0.66 & 0.56 & 0.12 & 0.84 & 0.68 & 0.68 & 0.34\\
        Diffusion Policy & 100 & \textbf{1.00} & \textbf{1.00} & \textbf{1.00} & \textbf{1.00} & \textbf{1.00} & \textbf{0.97} & 0.90 & 0.72 & \textbf{0.90} & 0.91 \\
        \ours{} & \textbf{1.17} & \textbf{1.00} & \textbf{1.00} & \textbf{1.00} & 0.96 &  0.98 & 0.96 & \textbf{0.92} & \textbf{0.80} & 0.88 &  \textbf{0.96}\\
        \bottomrule

    \end{tabular}
}
    \label{tab:robomimic}
\end{table}
\begin{figure*}[t]
    \centering
    \includegraphics[width=1.0\textwidth]{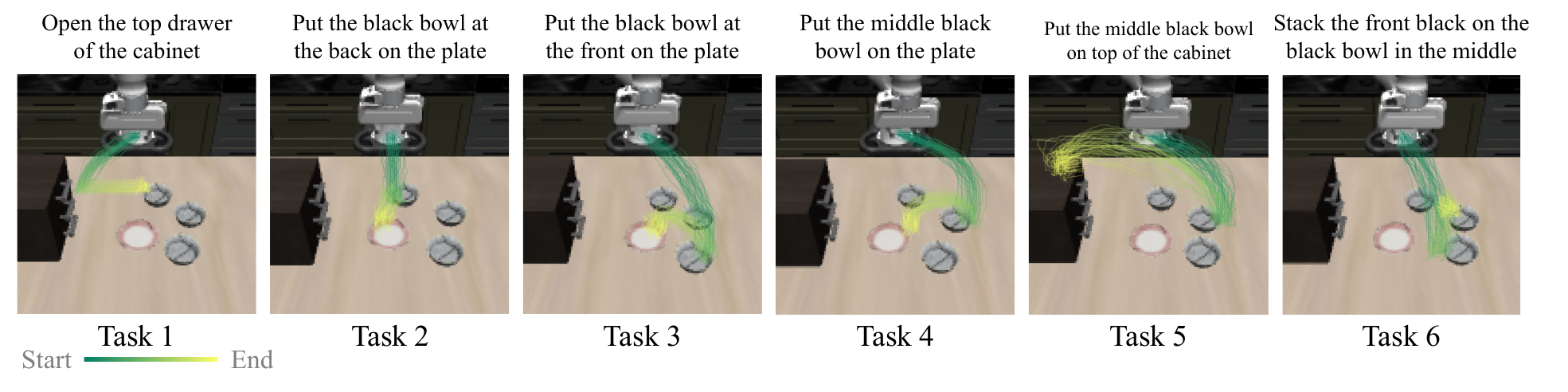}
    \caption{\textbf{LIBERO tasks.} We visualize the demonstrated trajectories of the robot's end effector. 
    }
    \label{fig:libero}
\end{figure*}

\subsection{Robot Manipulation Tasks}

\textbf{Experiment Setup} To further validate how \ours{} performs on practical robotics tasks, we compare \ours{} against baselines on a Push-T task~\citep{chi2023diffusion}, the RoboMimic~\cite{mandlekar2021matters} benchmark (Lift, Can, Square, Transport, ToolHang) and the LIBERO~\citep{liu2023libero} benchmark.
For the Push-T task and the tasks in RoboMimic, we follow the exact experimental setup described in Diffusion Policy~\citep{chi2023diffusion}. Following the Diffusion Policy, we add three additional baseline methods: 1) LSTM-GMM, BC with the LSTM model and a Gaussian mixture head, 2) IBC, the implicit behavioral cloning~\citep{florence2022implicit}, an energy-based model for generative decision-making, and 3) BET~\cite{shafiullah2022behavior}. 
For the LIBERO tasks, we pick a subset of six Kitchen tasks and follow the setup described in the LIBERO paper (Check Figure~\ref{fig:libero} for the description of the six tasks).

\begin{table*}[h!]
    \centering
    \caption{
    Success Rate on LIBERO Benchmark. The highest success rate for each task are highlighted in \textbf{bold}.}
    \resizebox{0.9\columnwidth}{!}{
    \begin{tabular}{lcccccccc}
        \toprule 
        Method & NFE$\downarrow$ & Task 1 & Task 2 & Task 3 & Task 4 & Task 5 & Task 6 & Average\\
        \midrule
        Rectified Flow \textbf{(\textit{Needs reflow})}  & 1 & 0.90  & 0.82   & 0.98  & 0.82  & 0.82  & 0.96 & 0.88\\
        \cmidrule(lr){1-1}\cmidrule(lr){2-2}\cmidrule(lr){3-3}\cmidrule(lr){4-4}\cmidrule(lr){5-5}\cmidrule(lr){6-6}\cmidrule(lr){7-7}\cmidrule(lr){8-8}\cmidrule(lr){9-9}
        Diffusion Policy & 1 & 0.00  & 0.00  & 0.00 & 0.00 & 0.00 & 0.00 & 0.00 \\
        Diffusion Policy & 2 & 0.00  & 0.58  & 0.36 & 0.66 & 0.36 & 0.32 & 0.38 \\
        Diffusion Policy & 20 & 0.94  & \textbf{0.84} & \textbf{0.98}  & 0.78  & 0.82 & 0.92 & 0.88 \\
        \ours{}  & \textbf{1.27} & \textbf{0.98}  & 0.80 & \textbf{0.98}  & \textbf{0.82}  & \textbf{0.90}  & \textbf{0.96} & \textbf{0.91} \\
        \bottomrule 
        
    \end{tabular}
}   
    \label{tab:libero}
\end{table*}

\textbf{\ours{} consistently outperforms competitors in varied robot manipulation tasks with high efficiency.} The results of the Push-T task and the RoboMimic benchmark are summarized in Table~\ref{tab:robomimic}. From the table, we observe that \ours{} consistently achieves comparable or higher success rates across different challenging manipulation tasks, compared against all baselines, with only an average NFE of 1.17. Note that Diffusion Policy, while showing high success rates using NFE = 100, falls behind when NFE = 1. Results for the six LIBERO tasks are presented in Table~\ref{tab:libero}. Aligning with findings from our previous experiments, \ours{} once again outperforms BC and Diffusion Policy in terms of success rate with an average NFE of 1.27. We additionally visualize the variance predicted by \ours{} in Figure~\ref{fig:est_variance}. It can be seen that the model identifies the high variance when the robot's end-effector is close to the object or target area, matching the variance from the demonstration data.

\subsection{Ablation Study} \label{sec:exp_efficiency}
We valid how \ours{} performs against baselines regarding the training and inference efficiency. In addition, we examine how critical the variance estimation network is.

\begin{minipage}{0.7\textwidth}
\begin{figure}[H]
    \centering
    \includegraphics[width=1.0\columnwidth]{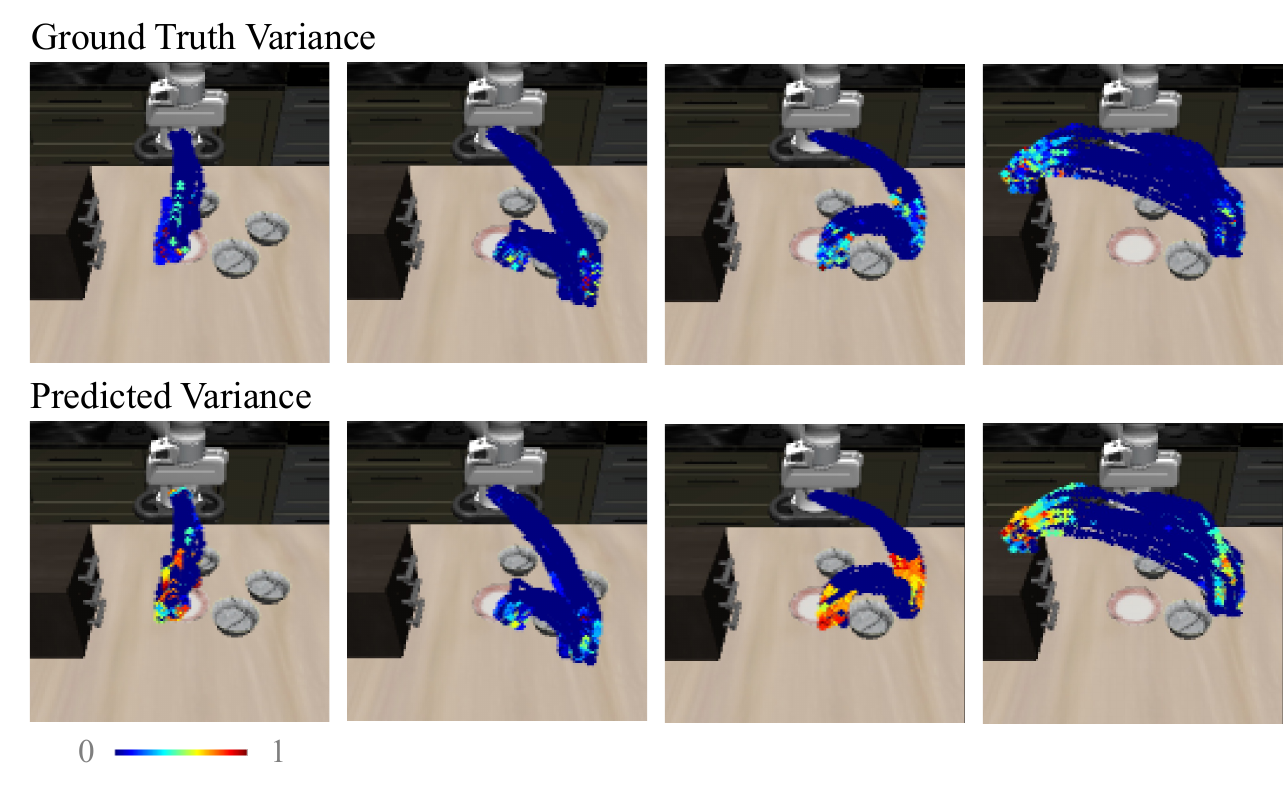}
    \caption{\textbf{Predicted variance.} We visualize the variance predicted by \ours{}. The variance is computed on states from the expert's demonstration and averaged over all simulation steps (e.g., $t$ from $0$ to $1$). Then we normalize the variance to $[0, 1]$ by the largest variance found at all states.}
    \label{fig:est_variance}
\end{figure}
\end{minipage}
\hfill
\begin{minipage}{0.27\textwidth}
\begin{figure}[H]
    \centering
    \includegraphics[width=1.00\columnwidth]{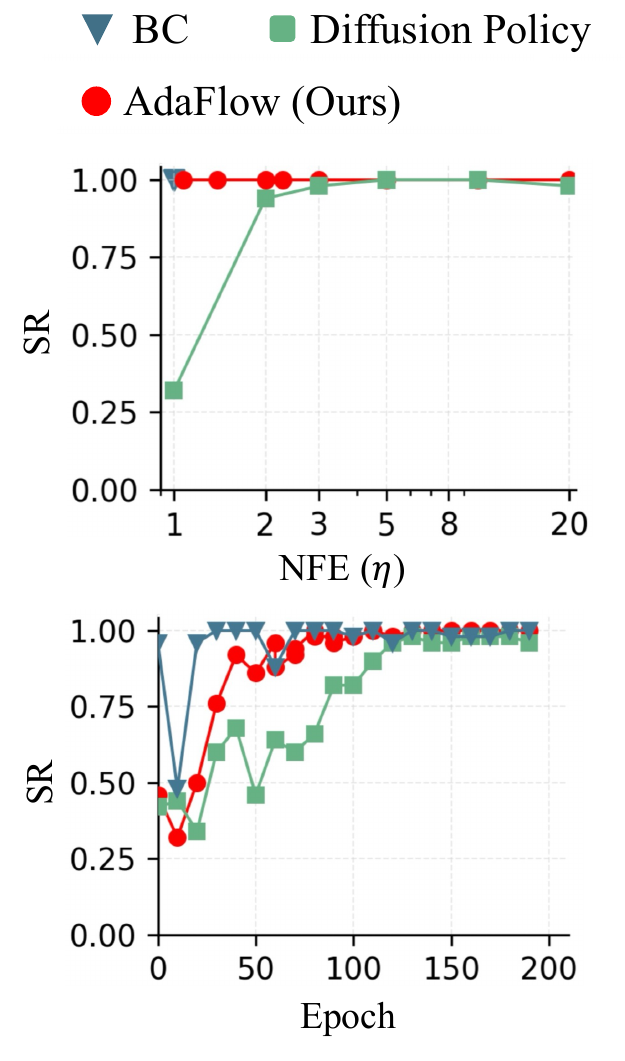}
    \caption{Ablation studies on \ours{}.}
    \label{fig:epoch_vs_sr}
\end{figure}
\end{minipage}

\paragraph{Higher Training and Inference Efficiency.} 
Figure~\ref{fig:epoch_vs_sr} (top) examines changes in success rate relative to the NFE. \ours{} maintains a high success rate with a very low NFE, whereas the Diffusion Policy generally requires more than three NFE to perform well. Although BC performs well with one NFE, it demonstrates very limited behavioral diversity and struggles to model multi-modal behavior. 
Figure~\ref{fig:epoch_vs_sr} (bottom) illustrates training efficiency by displaying the success rate over epochs. It shows that \ours{} has a better area-under-curve than Diffusion Policy, indicating faster learning. As expected, due to its simplicity, Behavioral Cloning (BC) achieves the best learning efficiency. 

\paragraph{Robustness to $\eta$.} In Figure~\ref{fig:epoch_vs_sr}, the NFEs in \ours{} are calculated at various $\eta$ values. It shows that \ours{} is robust to changes in $\eta$.

\paragraph{On the Importance of Variance Estimation.} In Table~\ref{tab:ablation_study}, we provide the performance of \ours{} with and without the variance estimation network on the four mazes from Section~\ref{sec::exp-nav2d}. From the results, it is clear that the variance estimation network not only makes inference faster, but can also lead to better performance.
\begin{table}[h!]
\centering
\caption{Ablation study on the use of estimated variance to determine inference steps. Euler sampler is used for \ours{} without variance estimation.}
\label{tab:ablation_study}
\resizebox{0.6\columnwidth}{!}{
\begin{tabular}{lcccc}
\toprule
 & Maze1 & Maze1 & Maze3 & Maze4 \\
\midrule
w/o Variance Estimation & 0.78 & 1.00 & 0.92 & 0.80 \\
\ours{} (Ours)         & 0.98 & 1.00 & 0.96 & 0.86 \\
\bottomrule
\end{tabular}
}
\end{table}

\section{Conclusion}
\label{sec::conclusion}
We present \ours{}, 
a novel imitation learning algorithm adept at efficiently generating diverse and adaptive policies, addressing the trade-off between computational efficiency and behavioral diversity inherent in current models. Through extensive experimentation across various settings, \ours{} demonstrated superior performance across multiple dimensions including success rate, behavioral diversity, and training/execution efficiency. 
This work lays a robust foundation for future research on adaptive imitation learning methods in real-world scenarios.

\bibliographystyle{neurips}
\bibliography{reference}

\clearpage

\onecolumn
\appendix


\section{Appendix}

\subsection{Proof of Proposition~\ref{pro:good} and Proposition~\ref{pro:good2}. }\label{sec::apx-proof-pro:good-pro:good2}

\newtheorem{proof}{PROOF}
 
\begin{proof} 
$\var_{\pi_E}(\vv a | s) = 0$ means that the action $\vv a = a$
equals a deterministic value $a$  given $s$. 
With $\vv x_t = t \vv a + (1-t)\vv x_0$, note that 
$$
\vv a - \vv x_0 =   \frac{1}{1-t} (a - \vv x_t). 
$$
Therefore, $\vv a - \vv x_0$ is deterministically decided by $\vv x_t$ and $s$. 
This yields
$$
v^*( x, t~|~s)= \E[ a - \vv x_0 ~|~ \vv x_t = x ]
= \frac{1}{1-t} (a - x). 
$$
Therefore, we have 
$$
\d \vv z_t = v^*(\vv z_t, t ~|~s) = \frac{1}{1-t}(a - \vv z_t) \d t.
$$
Solving ODE this yields 
$$
\vv z_t = t a + (1- t) \vv z_0 = (1-t ) v^*(\vv z_0, 0 ~|~ s).$$
Differentiating it also yields
$$
\vv z_t = (a - \vv z_0) \d t. 
$$
We also have $\sigma^2(x, t~|~s)$ again 
because $a - \vv x_0$ is deterministic given $\vv x_t$ and $s$: 
\bb
\sigma^2( x, t~|~ s)
= \var(\vv a - \vv x_0 ~|~ \vv x_t = x, s) =0. 
\ee

\end{proof}

\subsection{Proof of Proposition~\ref{pro:w2}}
\label{sec::apx-proof-pro:w2}

\begin{proof}
Following the property of rectified flow, 
the distribution of $\vv x_1 = t \vv a + (1-t)\vv x_0$ coincides with $p_t$ for all $t\in[0,1]$. Hence, we can assume that $\vv z_t = \vv x_t \sim p_t^*$. 
In this case, we have $\vv z_{t+\epsilon_t}= \vv x_t + \epsilon_t v^*(\vv z_t, t~|~s)$ 
and $\vv x_{t+\epsilon_t} = \vv x_t + \epsilon_t(\vv a - \vv x_0)$. 
We have   
\bb 
& W_2(p_{t+\epsilon_t}^*,  
 p_{t+\epsilon_t})^2 \\ 
& \leq 
\E\left [\norm{\vv z_{t+\epsilon_t} - \vv x_{t+\epsilon_t}}^2_2 \right ]   \\ 
& = \E\left[ \E\left [\norm{\vv z_{t+\epsilon_t} - \vv x_{t+\epsilon_t}}^2_2 ~|~ \vv x_t \right ]  \right] \\ 
& = \E\left[ \E\left [\norm{ \epsilon_t v^*(\vv z_t, t~|~s) - \epsilon_t (\vv a -\vv x_0 )}^2_2 ~|~ \vv x_t \right ]  \right] \\  
& =  \epsilon_t^2 \E_{\vv z_t\sim p_t}[\sigma^2(\vv z_t, t ~|~s)]. 
\ee
\end{proof}

\subsection{Proof of Proposition~\ref{pro:w22}}
\label{sec::apx-proof-pro:w22}
\begin{proof} 

Assume the adaptive algorithm visits 
the time grid of $0=t_0, t_1, \ldots, t_N = 1$. 

Define $\vv z_{t}^{t_i}$ 
be the result when we implement the adaptive discretization algorithm upto $t_{i}$ and then switch to follow the exact ODE afterward, that is, we have $\d  \vv z_{t}^{t_i} = v_t(\vv z_{t} ^{t_i}) \d t $ for $t \geq t_i$. 
In this way, we have $\vv z_t^1 = \vv z_t$, and $\vv z_t^0 = \vv z_t^*$, where $\vv z_t^*$ is the trajectory of the exact ODE $\d  \vv z_t^* = v_t^*(\vv z_t^*) \d t$.  

From Lemma~\ref{lem:dd},  we have 
\bb 
\norm{\vv z_{1}^{t_{i-1}} - \vv z_{1}^{t_{i}}}
\leq 
\exp(L(1-t_i)) \norm{\vv z_{t_i}^{t_i} - \vv z_{t_i}^{t_{t-1}}}. 
\ee
Let $p_t^{t_i}$ be the distribution of $\vv z_t^{t_i}$. Then we have $p_t^1 = p_t$ and $p_t^0= p_t^*$.    
Then 
\bb
W_2(p_1^{t_{i-1}}, p_1^{t_{i}}) 
& \leq 
\E\left [ \norm{\vv z_1^{t_{i-1}}- \vv z_1^{t_{i}}}^2 \right ] ^{1/2} \\
&  = \exp(L(1-t_i))  \E \left  [ \norm{\vv z_{t_i}^{t_{i-1}}- \vv z_{t_i}^{t_{i}}}^2  \right ]^{1/2} \\
& = \exp(L (1-t_i))  \max(\eta, ~ \epsilon_{min}^2M/2)\\ 
& = C \epsilon_{\min}^2  \exp(- L t_i)   , 
\ee 
where $C=\frac{1}{2}\max(M, M_\eta) \exp(L (1-t_i))$.  
Here  we use the bound in the proof of Proposition~\ref{pro:good}
and Lemma~\ref{lem:dd}. 
Hence, 
\bb 
W_2(p_1^*, p_1)
& = \sum_{i=1}^{N_{\mathrm{ada}}} W_2(p_1^{t_{i-1}}, p_1^{t_i})  \\ 
& \leq \sum_{i=1}^{N_{\mathrm{ada}}} C \epsilon_{\min}^2  \exp(- L t_i)   \\
& \leq C  \times \frac{N_{\mathrm{ada}}}{N_{\max}} \times \epsilon_{\min} ,
\ee 

where $C= \exp(L) \max(M, M_\eta)$.

\end{proof}

\begin{lem}\label{lem:dd}
Let $\norm{v_t}_{Lip}\leq L$ for  $t \in [0,1]$. 
Assume $x_t$ and $y_t$ solve $\d x_t = v_t(x_t)\dt $ and $\d y_t = v_t(y_t)\dt $ starting from $x_0, y_0$, respectively. We have 
\bbb \label{equ:xtlipcontrol}
\norm{x_t - y_t} \leq \exp(Lt ) \norm{x_0 - y_0},~~~~\forall t \in [0,1].
\eee 
\end{lem}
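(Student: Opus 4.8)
The plan is to use the standard Grönwall-type argument. First I would look at the squared distance $g(t) = \norm{x_t - y_t}^2$, which is differentiable in $t$ since $x_t, y_t$ solve the respective ODEs. Differentiating, $\frac{\d}{\dt} g(t) = 2 \langle x_t - y_t, \; v_t(x_t) - v_t(y_t)\rangle$. By Cauchy--Schwarz and the Lipschitz assumption $\norm{v_t}_{Lip} \leq L$, we get $\frac{\d}{\dt} g(t) \leq 2 \norm{x_t - y_t}\,\norm{v_t(x_t) - v_t(y_t)} \leq 2 L \norm{x_t - y_t}^2 = 2L\, g(t)$.

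Next I would apply Grönwall's inequality to the differential inequality $g'(t) \leq 2L\, g(t)$, which yields $g(t) \leq g(0) \exp(2Lt)$, i.e.\ $\norm{x_t - y_t}^2 \leq \exp(2Lt)\,\norm{x_0 - y_0}^2$. Taking square roots gives exactly \eqref{equ:xtlipcontrol}: $\norm{x_t - y_t} \leq \exp(Lt)\,\norm{x_0 - y_0}$ for all $t \in [0,1]$.

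A minor technical point to handle cleanly is the case $g(0) = 0$ (or $g(t) = 0$ at some intermediate time), where one cannot divide by $\norm{x_t - y_t}$; this is dispatched either by noting uniqueness of ODE solutions under the Lipschitz hypothesis (so $x_0 = y_0$ forces $x_t \equiv y_t$) or, more directly, by working with the differential inequality $g'(t) \leq 2L g(t)$ itself and invoking the integral form of Grönwall, which requires no such division. Alternatively one can bound $\frac{\d}{\dt}\norm{x_t - y_t} \leq L\norm{x_t-y_t}$ in the weak/one-sided-derivative sense. The main obstacle is essentially just this regularity bookkeeping; the core estimate is routine.
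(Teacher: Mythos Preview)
Your proposal is correct and matches the paper's proof essentially line for line: differentiate $\norm{x_t - y_t}^2$, bound the derivative by $2L\norm{x_t - y_t}^2$ via the Lipschitz condition (through Cauchy--Schwarz), and conclude with Gr\"onwall. The paper is terser and omits the $g(0)=0$ discussion, but the argument is identical.
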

\begin{proof}
\bb 
\ddt \norm{x_t - y_t}^2 
& =  2(x_t - y_t) \tt (v_t(x_t) -  v_t(y_t)) \\ 
& \leq 2L \norm{x_t - y_t}^2, 
\ee 
where we used $ \norm{v_t(x_t) - v_t(y_t)} \leq L\norm{x_t - y_t}$. 
Using Gronwall's inequality yields the result. 
\end{proof}

\begin{lem}\label{lem:simpleEuler}
Under Assumption~\ref{ass:euler}, we have  
$$
\norm{x_{t+\epsilon} - (x_{t} + \epsilon v_t(x_t))} 
\leq \frac{\epsilon^2 M}{2}, 
$$
for $ 0\leq t \leq \epsilon+t\leq 1$. 
\end{lem}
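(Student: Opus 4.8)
The plan is to treat this as the textbook local truncation error bound for the explicit Euler step and derive it directly from Taylor's theorem with the integral form of the remainder. First, I would use that $x_t$ solves $\d x_t = v_t(x_t)\,\d t$, so along this trajectory $\dot x_t = v_t(x_t)$, and Assumption~\ref{ass:euler} already presupposes that the second derivative $\ddot x_t$ exists and satisfies $\norm{\ddot x_t}\le M$ on $[0,1]$. Hence the exact increment admits the second-order Taylor representation with integral remainder,
$$
x_{t+\epsilon} = x_t + \epsilon\,\dot x_t + \int_t^{t+\epsilon}(t+\epsilon - s)\,\ddot x_s\,\d s,
$$
which is valid because the hypothesis $0\le t\le t+\epsilon\le 1$ keeps the whole interval $[t,t+\epsilon]$ inside the domain where the regularity holds.

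Second, I would subtract the Euler update $x_t + \epsilon v_t(x_t) = x_t + \epsilon\dot x_t$ from both sides; the two leading terms cancel exactly, leaving only the remainder integral,
$$
x_{t+\epsilon} - \big(x_t + \epsilon v_t(x_t)\big) = \int_t^{t+\epsilon}(t+\epsilon - s)\,\ddot x_s\,\d s .
$$
Taking norms, moving the norm inside the integral by the triangle inequality, and applying the curvature bound $\norm{\ddot x_s}\le M$ gives
$$
\norm{x_{t+\epsilon} - \big(x_t + \epsilon v_t(x_t)\big)} \le \int_t^{t+\epsilon}(t+\epsilon - s)\,\norm{\ddot x_s}\,\d s \le M\int_t^{t+\epsilon}(t+\epsilon - s)\,\d s = \frac{\epsilon^2 M}{2},
$$
where the last equality is the elementary computation $\int_t^{t+\epsilon}(t+\epsilon - s)\,\d s = \epsilon^2/2$. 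This is exactly the claimed estimate.

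There is no real obstacle here; the lemma is a standard fact recorded only for use in the proof of Proposition~\ref{pro:w22}. The one point I would state carefully is the justification of the second-order expansion: it is not extra work since the existence and uniform boundedness of $\ddot x_t$ is precisely what Assumption~\ref{ass:euler} postulates (and, if one wishes to derive it, $\ddot x_t = \partial_t v_t(x_t) + Dv_t(x_t)\dot x_t$ exists given sufficient smoothness of $v$, with $\norm{\ddot x_t}\le M$ assumed), and the integrand $(t+\epsilon-s)\ddot x_s$ is integrable on $[t,t+\epsilon]$ because of that same uniform bound. Everything else is a one-line estimate.
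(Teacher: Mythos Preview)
Your proof is correct and is exactly the approach the paper takes: the paper's proof reads in its entirety ``Direct application of Taylor approximation,'' and you have simply written out that application in full.
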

\begin{proof}
Direct application of Taylor approximation. 
\end{proof}

\subsection{Visualization of Tasks}
We provide a visualization of the 2D Maze Figure~\ref{fig:maze_demo}.

\begin{figure}[h!]
    \centering
    \includegraphics[width=0.8\columnwidth]{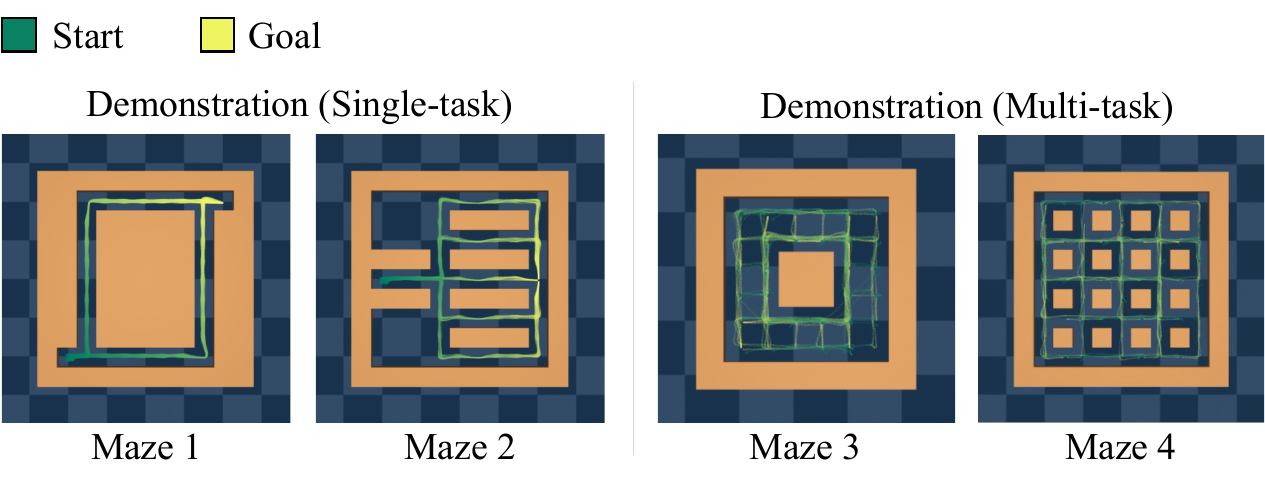}
    \caption{Trajectories of 100 demonstrations for each maze. }
    \label{fig:maze_demo}
\end{figure}

\subsection{Planner for Maze2D task}
We generate the demonstration data in Maze toy using planner similar to \cite{fu2020d4rl}. The planner devises a path in a maze environment by calculating waypoints between the start and target points. It begins by transforming the given continuous-state space into a discretized grid representation. Employing Q-learning, it evaluates the optimal actions and subsequently computes the waypoints by performing a rollout in the grid, introducing random perturbations to the waypoints for diversity. The controller connects these waypoints in an ordered manner to form a feasible path. In runtime, it dynamically adjusts the control action based on the proximity to the next waypoint and switches waypoints when close enough, ensuring the trajectory remains adaptive and efficient. 

\subsection{Comparative Analysis of Separate and Joint Training}

In this section, we provide a comparison between the two training strategies employed in our proposed solution: separate training and joint training. Our primary objective is to investigate whether there is a substantial difference in performance and efficiency between these two training approaches.

\textbf{Experiment Setup. }To conduct this comparative analysis, we designed experiments using our proposed framework with both training strategies. Specifically, we consider two approaches: separate and joint training. In \textbf{Separate Training} setting, we train the variance prediction network and the policy function separately, as described in our main paper. In \textbf{Joint Training} setting, we train both the variance prediction network and the policy function simultaneously in an end-to-end manner. The goal is to assess the impact of these training strategies on the overall performance. 

\textbf{Results and Discussion. }As shown in Table~\ref{tab:joint_train}, the performance were consistent between the two training approaches, indicating the effectiveness of our two-stage framework in balancing policy accuracy and uncertainty estimation. Separate training exhibited faster computational speed, making it the preferred choice once the policy function was robustly trained. Joint training required more computational resources and time. 

\begin{table}[h!]
    \centering
    \caption{
    Performance comparison of separate training and joint training of \ours{} in Maze tasks. 
    }
    \resizebox{0.6\textwidth}{!}{
    \begin{tabular}{lcccc}
        \toprule 
        & Maze 1 & Maze 2 & Maze 3 & Maze 4 \\
        \cmidrule(lr){2-2}\cmidrule(lr){3-3}\cmidrule(lr){4-4}\cmidrule(lr){5-5}
        \ours{} (Separate) & 0.98 & 1.00 & 0.96 & 0.86 \\
        \ours{} (Joint) & 1.00 & 1.00 & 0.96 & 0.88 \\
        \bottomrule 
    \end{tabular}
    }
    \label{tab:joint_train}
\end{table}

\subsection{Visualization of Exact Variance. }\label{exact_variance}

In the main paper, we showed the variance predictions made by \ours{} across different states within a robot's state space. 
Here, we explain how we compute the \textit{exact} variance for different states, to provide a ground truth of variance for reference. To achieve this, we first train a 1-Rectified Flow model for the task, then we can compute the exact variance by sampling: 

\begin{equation}
\frac{1}{N_t} \frac{1}{N_z} \sum_t \sum_{\vv z_0} \mathbb{E} \big[|| y - \vv z_0 - v(\vv z_t, t; x)||^2\big],~~\text{where}~~ \vv z_t = ty + (1-t)\vv z_0, (x,y)\sim p^*.
\label{eq:rfobj}
\end{equation}

For each states, we randomly sample 10 time steps (\(N_t=10\)) and 10 noises (\(N_z=10\)). 

\subsection{Visualization of Predicted Variance on Maze task. }\label{sec::maze_variance}

We present the predicted variance by \ours{} in Figure~\ref{fig:maze_variance}. 

\begin{figure*}[t]
    \centering
    \includegraphics[width=0.4\textwidth]{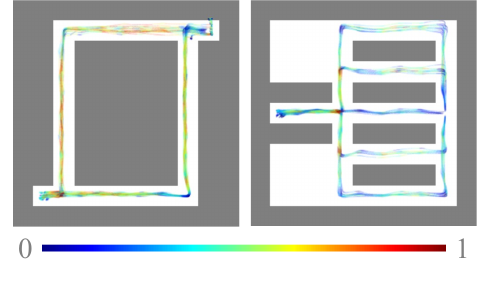}
    \vspace{-10pt}
    \caption{Predicted variance by \ours{} on the Maze task.
    }
    \label{fig:maze_variance}
\end{figure*}

\subsection{Additional Experimental Details. }\label{sec:additional_experimental_details}

\textbf{Model Architectures. }For the 1D toy example, we used a MLP constructed with 5 fully connected layers and SiLU activation functions. We integrated temporal information by extending the time input into a 100-dimensional time-encoding vector through the cosine transformation of a random vector, $cos{t * z_T}$, where $z_T$ is sampled from a Gaussian distribution. This time feature is then concatenated with the noise and condition ($x$) inputs to for time-aware predictions. The network comprises 4 hidden layers, each with 100 neurons, and the output layer predict a single $y$ value. The dataset consists of 10000 single-dimensional samples uniformly distributed in the range $[-5, 5]$. 

For navigation and robot manipulation tasks, we adopted the model architecture from  Diffusion Policy~\cite{chi2023diffusion}. For navigation task, we use the same architecture as used in Push-T task. In the RoboMimic and LIBERO experiments, we used the Diffusion Policy-C architecture. To ensure a fair comparison across different methods, we maintained a consistent architecture for all methods in our experiments, except where specifically noted. Detailed parameters are available in Table~\ref{tab:hyperparameters}.

\begin{table}[h!]
    \centering
    \caption{
    Hyperparameters used for training \ours{} and baseline models. }
    \resizebox{1.0\textwidth}{!}{
    \begin{tabular}{l|ccc|ccc|ccc}
    \toprule
     & \multicolumn{3}{c}{1D Toy} & \multicolumn{3}{c}{Maze} & \multicolumn{3}{c}{RoboMimic \& LIBERO}\\
    \cmidrule(lr){2-4}\cmidrule(lr){5-7}\cmidrule(lr){8-10}
    Hyperparameter & RF \& \ours{} & BC & Diffusion Policy & RF \& \ours{} & BC & Diffusion Policy & RF \& \ours{} & BC & Diffusion Policy \\
    \cmidrule(lr){1-1}\cmidrule(lr){2-4}\cmidrule(lr){5-7}\cmidrule(lr){8-10}
    Learning rate & 1e-2 & 1e-2 & 1e-2 & 1e-4 & 1e-4 & 1e-4 & 1e-4 & 1e-4 & 1e-4 \\    
    Optimizer & Adam & Adam & Adam & AdamW & AdamW & AdamW & AdamW & AdamW & AdamW \\ 
    $\beta_1$ & 0.9 & 0.9 & 0.9 & 0.9 & 0.9 & 0.9 & 0.95 & 0.95 & 0.95 \\
    $\beta_2$ & 0.999 & 0.999 & 0.999 & 0.999 & 0.999 & 0.999 & 0.999 & 0.999 & 0.999\\
    Weight decay & 0 & 0 & 0 & 1e-6 & 1e-6 & 1e-6 & 1e-6 & 1e-6 & 1e-6 \\                
    Batch size & 1000 & 1000 & 1000 & 256 & 256 & 256 & 64 & 64 & 64 \\
    Epochs & 200 & 200 & 400 & 200 & 200 & 200 & 500(L)~/~3000(RM) & 500(L)~/~3000(RM) & 500(L)~/~3000(RM) \\
    Learning rate scheduler & cosine & cosine & cosine & cosine & cosine & cosine & cosine & cosine & cosine \\
    EMA decay rate & - & - & - & 0.9999 & 0.9999 & 0.9999 & 0.9999 & 0.9999 & 0.9999 \\
    \cmidrule(lr){1-1}\cmidrule(lr){2-4}\cmidrule(lr){5-7}\cmidrule(lr){8-10}
    Number of training time steps & - & - & 100 & - & - & 20 & - & - & 100\\
    Number of Inference time steps & 100 (RF) & - & 100(DDPM) & - & - & 20(DDPM) & - & - & 100(DDPM)  \\
    $\eta$ & 0.1 & - & - & 1.5 & - & - & 1.0 & - & - \\
    $\epsilon_{\text{min}}$ & 5 & - & - & 5 & - & - & 10 & - & - \\
    \cmidrule(lr){1-1}\cmidrule(lr){2-4}\cmidrule(lr){5-7}\cmidrule(lr){8-10}
    Action prediction horizon & - & - & - & 16 & 16 & 16 & 16 & 16 & 16 \\
    Number of observation input & - & - & - & 2 & 2 & 2 & 2 & 2 & 2  \\
    Action execution horizon & - & - & - & 8 & 8 & 8 & 8 & 8 & 8 \\
    Observation input size & 1 & 1 & 1 & \multicolumn{3}{c}{4 (Single-task)~/~6(Multi-task)} & $76 \times 76$ & $76 \times 76$ & $76 \times 76$\\
    \bottomrule
    \end{tabular}
    }   
    \label{tab:hyperparameters}
\end{table}

\textbf{Implementation of Baselines. }In our studies, \textbf{BC} was implemented as a baseline, applying behavior cloning in its most straightforward form and using a Mean Squared Error loss function between the predicted and ground truth actions. The implementations for DDPM and DDIM remained consistent with those outlined in~\cite{chi2023diffusion}. Across all experiments, consistency was maintained regarding architecture, input, and output, with all methods adhering to a similar experimental pipeline. We just use a 4 layer MLP with SiLU activation for the variance prediction, with hidden dimension of 512, which is a very small network whose computational overhead can be neglected compared to the full model.

\textbf{Implementation of Vairance Prediction Network. }In the 1D toy experiment, we designed the variance prediction network as a 4-layer MLP, mirroring the main model's architecture for simplicity. In theory, the variance estimation network takes the same input as rectified flow model, so its input can be just the intermediate features extracted by the main model. Hence in the navigation and manipulation experiments, the inputs of variance prediction networks are the bottle-neck features extracted by the U-Net model.

\textbf{Training on RoboMimic. }Training Diffusion Models on RoboMimic is very resource-intensive. Training and evaluating a Transport task requires over a month of GPU hours. More complex tasks, such as ToolHang, can demand up to three times longer~\footnote{See \href{https://github.com/real-stanford/diffusion_policy/issues/4}{this link} }Given the challenges in replicating the results from \cite{chi2023diffusion}, we opted to start with their open-sourced pretrained model. We then fine-tuned the baselines and our method for 500 epochs and subsequently compared the performance of different models. 

\subsection{Comparison with standard Rectified Flow.}

For the purpose of policy learning, we can consider standard Rectified Flow as a subset of our method, which can be recovered with specific choices of $\eta$ and $\epsilon_\text{min}$. In this section, we compare our approach with the standard Rectified Flow, particularly focusing on the generation within a single step. Standard Rectified Flow requires a reflow or distillation stage to straighten the ODE process. During this reflow stage, the model simulates data using the initial 1-Rectified Flow. These data are then used in distillation training, resulting in what is termed a 2-Rectified Flow. Theoretically, a 2-Rectified Flow is capable of producing a straight generation trajectory, which enables one-step generation. In contrast, the 1-Rectified Flow tends to be less straight, necessitating multiple steps for sample generation.

In Table~\ref{tab:maze_result_rf}, we compare the performance of 1-Rectified Flow, 2-Rectified Flow, and our method in the maze task. Furthermore, Figure~\ref{fig:maze_generated_traj_rf} illustrates the trajectories produced by both standard Rectified Flow and our method. It's evident that the standard 1-Rectified Flow struggles to generate a diverse range of actions in a single step. In contrast, our method is able to produce diverse behaviors in nearly one step. This efficiency is attributed to our method's ability to estimate the variance across different states, identifying those that require multi-step generation.

\begin{table}[h!]
    \centering
    \caption{Performance on maze navigation tasks. The table showcases the success rate (\textbf{SR}) for each model across different maze complexities. The highest success rate for each task are highlighted in \textbf{bold}. Note that 2-RF needs an expensive distillation training stage. }
    \vspace{3pt}
    \resizebox{0.6\columnwidth}{!}{
    \begin{tabular}{lccccc}
        \toprule 
        & NFE$\downarrow$ & Maze 1 & Maze 2 & Maze 3 & Maze 4 \\
        \cmidrule(lr){2-2}\cmidrule(lr){3-3}\cmidrule(lr){4-4}\cmidrule(lr){5-5}\cmidrule(lr){6-6}
        1-RF & 1 & \textbf{1.00} & \textbf{1.00} & 0.98 & 0.80 \\ 
        1-RF & 5 & 0.82 & \textbf{1.00} & 0.94 & 0.80 \\
        2-RF (reflow) & 1 & 0.82 & \textbf{1.00} & \textbf{1.00} & 0.80 \\
        \midrule
        \ours{} ($\eta=1.5$) & 1.56 & 0.98 & \textbf{1.00} & 0.96 & \textbf{0.86} \\
        \ours{} ($\eta=2.5$) & 1.12 & \textbf{1.00} & \textbf{1.00} & 0.94 & 0.78 \\
        \bottomrule 
    \end{tabular}
}
    \label{tab:maze_result_rf}
\end{table}

\begin{figure}[t]
    \centering
    \includegraphics[width=0.7\columnwidth]{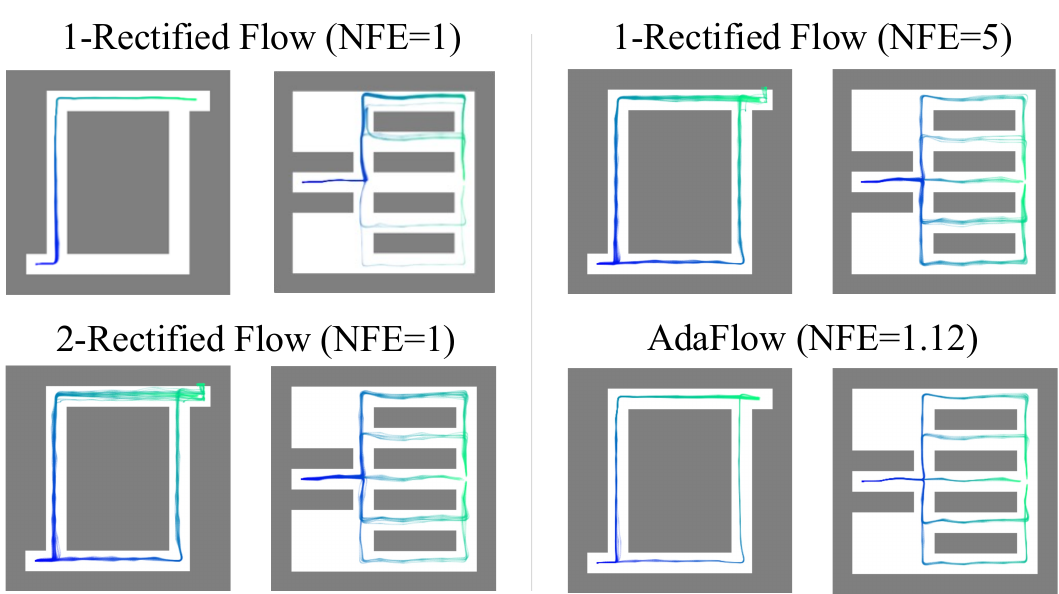}    
    \caption{\textbf{Generated trajectories.} We visualize the trajectories generated by standard Rectified Flow and \ours{}, with the agent's starting point remaining fixed. 
0}
    \label{fig:maze_generated_traj_rf}
\end{figure}


\end{document}